\newcommand{\myTitle}{%
    \texorpdfstring{%
        Task and Motion Informed Trees (TMIT*): Almost-Surely Asymptotically Optimal Integrated Task and Motion Planning%
        }{%
        Task and Motion Informed Trees (TMIT*): Almost-Surely Asymptotically Optimal Integrated Task and Motion Planning%
        }%
    }
\crefname{line}{line}{lines}
\crefname{figure}{Fig.}{Figs.}
\Crefname{figure}{Fig.}{Figs.}
\crefname{equation}{Eq.}{Eqs.}
\Crefname{equation}{Eq.}{Eqs.}
\crefname{section}{Sec.}{Secs.}
\Crefname{section}{Sec.}{Secs.}
\crefname{definition}{Def.}{Defs.}
\Crefname{definition}{Def.}{Defs.}
\crefname{algorithm}{Alg.}{Algs.}
\Crefname{algorithm}{Alg.}{Algs.}
\newtheoremstyle{prettydef}%
{6pt}%
{6pt}%
{}%
{}%
{\bfseries}%
{}%
{\newline}%
{\thmname{#1}\thmnumber{ #2}:\thmnote{ \textnormal{\textit{#3}}}}
\theoremstyle{prettydef}
\newtheorem{definition}{Definition}
\newtheorem{theorem}{Theorem}
\tikzstyle{vecArrow} = [thick, decoration={markings,mark=at position
\tikzstyle{innerWhite} = [semithick, white,line width=1.4pt, shorten >= 4.5pt]
\tikzstyle{io} = [trapezium, trapezium left angle=70, trapezium right angle=110, text centered, draw=black, align=center]
\tikzstyle{process} = [rectangle, text centered, draw=black, align=center]
\tikzstyle{stepnode} = [circle, text centered, draw=black, align=center]
\tikzstyle{decision} = [diamond, text centered, draw=black, align=center, aspect=2.5]
\tikzstyle{arrow} = [thick, ->, >=stealth]
\let\vec\boldsymbol
\setlist[enumerate]{label=({\arabic*})}
\newcommand{\true}{\texttt{true}}
\newcommand{\false}{\texttt{false}}
\newcommand{\SE}[1]{\ensuremath{\mathrm{SE}(#1)}}
\newcommand{\BoolDom}{\ensuremath{\mathbb{B}}}
\newcommand{\ModeFam}{\ensuremath{\mathbb{M}}}
\newcommand{\mode}{\ensuremath{\mathcal{M}}}
\newcommand{\robot}{\ensuremath{R}}
\newcommand{\objects}{\ensuremath{\mathcal{O}}}
\newcommand{\constraint}{\ensuremath{\phi}}
\newcommand{\effect}{\ensuremath{\psi}}
\newcommand{\action}{\ensuremath{\alpha}}
\newcommand{\actions}{\ensuremath{\mathcal{A}}}
\newcommand{\state}{\ensuremath{\vec{q}}}
\newcommand{\statespace}{\ensuremath{\mathcal{Q}}}
\newcommand{\config}{\ensuremath{\mathcal{Q}}}
\newcommand{\sym}{\ensuremath{s}}
\newcommand{\symbols}{\ensuremath{\mathcal{S}}}
\newcommand{\predicates}{\ensuremath{\mathcal{P}}}
\newcommand{\geometricpreds}{\ensuremath{\predicates_\text{G}}}
\newcommand{\discretepreds}{\ensuremath{\predicates_\text{D}}}
\newcommand{\motion}{\ensuremath{\sigma}}
\newcommand{\motions}{\ensuremath{\Sigma}}
\newcommand{\cost}{\ensuremath{\gamma}}
\newcommand{\bvar}[1]{\ensuremath{a_{#1}}}
\newcommand{\new}[1]{#1}
\newcommand{\squeezeWords}{\looseness=-1}
\begin{document}

\title{\myTitle{}}

\author{Wil Thomason$^{1}$, Marlin P. Strub$^{2}$, and Jonathan D. Gammell$^{3}$%
	\thanks{Manuscript received: February 24, 2022; Revised: July 19, 2022; Accepted: July 23, 2022.}%
	\thanks{This paper was recommended for publication by Editor Hanna Kurniawati upon evaluation of the Associate Editor and Reviewers’ comments.}%
	\thanks{This work was supported by the National Defense Science \& Engineering Graduate Fellowship and UK Research and Innovation/EPSRC through ACE-OPS:\@ From Autonomy to Cognitive assistance in Emergency OPerationS [EP/S030832/1].
		We are grateful for this support.
		For the purpose of open access, the authors have applied a Creative Commons Attribution (CC BY) licence to any Author Accepted Manuscript version arising from this submission.}%
	\thanks{$^{1}$Department of Computer Science, Rice University, Houston, Texas, USA.\@ \texttt{\small wbthomason@rice.edu}}%
	\thanks{$^{2}$Jet Propulsion Laboratory, California Institute of Technology.\@ Work done at University of Oxford. \texttt{\small marlin.p.strub@jpl.nasa.gov}}%
	\thanks{$^{3}$Estimation, Search, and
		Planning (ESP) Group, Oxford Robotics Institute, University of
		Oxford, UK.\@ \texttt{\small gammell@robots.ox.ac.uk}}%
	\thanks{Digital Object Identifier (DOI): see top of this page.}}

\maketitle

\begin{abstract}
High-level autonomy requires discrete and continuous reasoning to decide both what actions to take and how to execute them.
Integrated Task and Motion Planning (TMP) algorithms solve these hybrid problems jointly to consider constraints between the discrete \new{symbolic} actions (i.e., the \emph{task plan}) and their continuous \new{geometric} realization (i.e., \emph{motion plans}).
This joint approach solves more difficult problems than approaches that address the task and motion subproblems independently.\squeezeWords

TMP algorithms combine and extend results from both task and motion planning.
TMP has mainly focused on computational performance and completeness and less on solution optimality.
Optimal TMP is difficult because the independent optima of the subproblems may not be the optimal \emph{integrated} solution, which can only be found by jointly optimizing both plans.

This paper presents Task and Motion Informed Trees (TMIT*), an optimal TMP algorithm that combines results from makespan-optimal task planning and almost-surely asymptotically optimal motion planning.
TMIT* interleaves asymmetric forward and reverse searches to delay computationally expensive operations until necessary and perform an efficient informed search directly in the problem’s hybrid state space.
This allows it to solve problems quickly and then converge towards the optimal solution with additional computational time, as demonstrated on the evaluated robotic-manipulation benchmark problems.

\end{abstract}

\begin{IEEEkeywords}
	Task and Motion Planning, Motion and Path Planning, Manipulation Planning, AI-Based Methods
\end{IEEEkeywords}

\section{Introduction}

\IEEEPARstart{P}{lanning} solutions to problems described by high-level specifications requires autonomously deciding both \emph{what} to do (i.e., the sequence of high-level actions) and \emph{how} to do it (i.e., the associated motions).
This is difficult since both of these decisions can affect later stages of the planning problem by altering the \new{valid} and reachable subsets of the search space.
Integrated Task and Motion Planning (TMP) is a holistic approach to solve these high-level planning problems by jointly considering the symbolic (i.e., actions) and geometric (i.e., motion) constraints on the solution.


Solving TMP problems is computationally expensive.
Evaluating a candidate sequence of actions (i.e., a \emph{task} or \emph{symbolic} plan) requires the computationally expensive operations of finding compatible action parameters and associated valid motion plans.
TMP algorithms typically consider multiple symbolic plans to solve a problem~\cite{dantam_incremental_constraint-based_2018} and must be efficient because the set of possible symbolic plans is combinatorially large for most real-world scenarios \new{and the sets of possible action parameters and motion plans are uncountably infinite}.

\begin{figure}[t!]
	\centering
	\includegraphics[clip,width=\columnwidth]{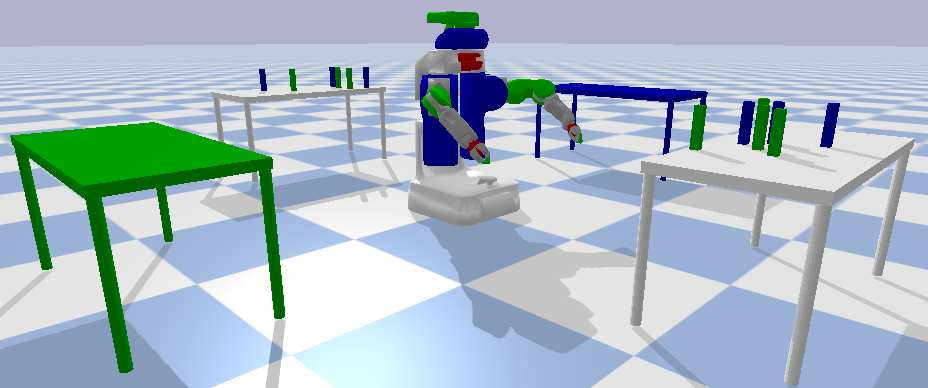}\caption{A clutter clearing example.
		The robot must move the green and blue sticks from their initial positions scattered on two tables to ending positions on the corresponding colored tables.
		Sticks may occlude others, forcing the robot to reason about a geometrically feasible order of manipulation.}\label{fig:clutter.scene}%
\end{figure}%
\begin{figure}[t!]
	\centering
	\includegraphics[clip,width=\columnwidth]{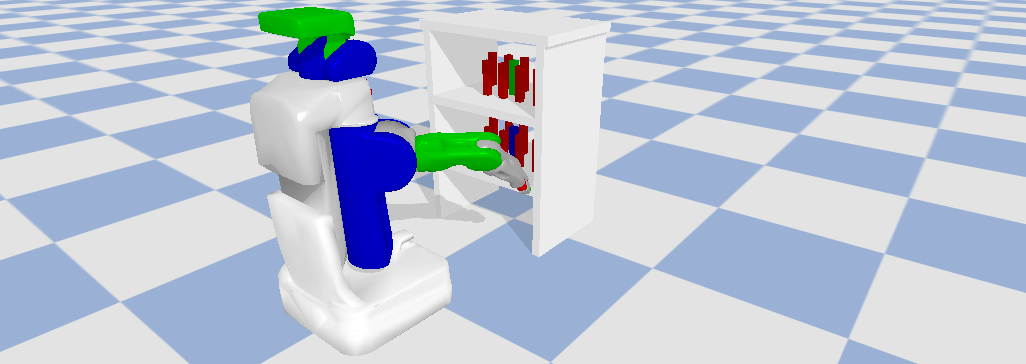}\caption{A shelf rearrangement example.
		The robot must move the green stick from the upper shelf to the lower shelf and the blue stick from the lower shelf to the upper shelf.
		The red sticks block many grasps of the two target sticks and must be maneuvered around or moved to solve the problem.}\label{fig:shelf.scene}%
\end{figure}

The majority of TMP work is focused on improving the computational performance of algorithms instead of finding optimal solutions~\cite{garrett_integrated_task_2020}.
Optimal TMP requires joint optimization of candidate symbolic plans and the corresponding action parameters and motion plans, which is more difficult than the independent optimal symbolic and optimal motion planning problems.
Existing solution-optimal TMP algorithms~\cite{toussaint_logic-geometric_programming_2015,vega-brown_asymptotically_optimal_2016,shome_pushing_boundaries_2020} have poor initial solution performance, require problem specific samplers and local motion planners, and/or are specific to manipulation planning.\squeezeWords

%
This paper presents Task and Motion Informed Trees (TMIT*), an optimal sampling-based TMP algorithm that extends results from makespan-optimal symbolic planning and almost-surely asymptotically optimal motion planning.
TMIT* efficiently searches the high-level problem's hybrid state space, \new{which consists of both discrete and continuously valued dimensions.}
This direct search allows TMIT* to be informed by both the symbolic and \new{geometric} constraints and to reuse motion planning effort as symbolic plans change.

TMIT* interleaves asymmetric forward and reverse searches~\cite{strub_aitstar_eitstar_2021} to identify geometrically infeasible plans and avoid computationally expensive action-parameter sampling and motion-planning operations.
When combined with novel SMT-based (Satisfiability Modulo Theories~\cite{biere_handbook_satisfiability_2009}) symbolic planning and a differentiable distance-based predicate representation~\cite{thomason_unified_sampling-based_2019}, this allows TMIT* to quickly find initial solutions to high-level problems and almost-surely converge asymptotically to the optimum with additional computational time.
We demonstrate the benefits of this approach on robotic-manipulation benchmark problems (\cref{fig:clutter.scene,fig:shelf.scene}), where TMIT* significantly outperforms earlier nonoptimal TMP work in initial solution times and is able to decrease solution costs given additional computational time.

\new{This work contributes methods for
	\begin{enumerate*}[label=(\arabic*)]
    \item almost-surely asymptotically optimal TMP that integrates relaxed SMT-based symbolic planning and sampling-based motion planning,
		\item anytime approximation of action-precondition-satisfying state manifolds to avoid state discretization,
		\item motion guided deferral of expensive precondition satisfying state sampling, and
		\item adaptive prioritization to order the search of a multimodal space
	\end{enumerate*}.\squeezeWords}


\section{Background and Related Work}
TMP is an active area of research drawing from advancements in the motion planning and task planning literatures.
TMIT* frames TMP as multimodal motion planning (\cref{sec:relatedwork.multimodal}).
It offers almost-surely asymptotically optimal TMP (\cref{sec:relatedwork.optimal}) with practical performance by extending work on batch-sampling-based motion planning (\cref{sec:relatedwork.batch}) and incremental symbolic planning (\cref{sec:relatedwork.symbolic}).
TMIT* uses deferred sampling of continuous action parameters to improve its computational efficiency (\cref{sec:relatedwork.deferred}).

\subsection{Multimodal Motion Planning and TMP}\label{sec:relatedwork.multimodal}

Multimodal motion planning finds valid motions through sequences of \emph{modes}, or discrete variations of a continuous configuration space, each imposing different constraints on the valid configurations~\cite{kingston_informing_multi-modal_2020,hauser_multi-modal_motion_2010}.
Manipulation planning is often modeled as multimodal motion planning~\cite{cambon_hybrid_approach_2009,hauser_multi-modal_motion_2010,alami_geometrical_approach_1989,hauser_randomized_multi-modal_2011,barry_manipulation_multiple_2013}, where modes correspond to choices of object grasps and placements.
TMP problems also have multimodal structure~\cite{hauser_randomized_multi-modal_2011,thomason_unified_sampling-based_2019,plaku_motion_planning_2010} with modes corresponding to different symbolic states that constrain the continuous \new{(i.e., geometric) state components}.
\new{Mode transitions are defined by symbolic high-level actions and link the task planning and motion planning subproblems.}
Framing TMP as multimodal motion planning allows approaches to avoid the backtracking necessary in most other \new{formulations}.\@\squeezeWords

\subsection{Almost-Surely Asymptotically Optimal TMP}\label{sec:relatedwork.optimal}

Most TMP work focuses on efficiently finding initial solutions.
Cost-aware or optimal TMP has only recently become a topic of interest to the field~\cite{vega-brown_asymptotically_optimal_2016}.
Earlier optimal TMP work~\cite{toussaint_logic-geometric_programming_2015} frames TMP as a nonlinear optimization problem and globally optimizes choices of action parameters and motions.
This optimization-based approach does not scale well with plan length, action space size, and geometric complexity.
\Citet{schmitt_optimal_sampling-based_2017} offer almost-surely asymptotically optimal manipulation planning based on a precomputed roadmap and domain-specific samplers for mode transitions.
\Citet{garrett_pddlstream_integrating_2020} produce TMP solutions that are optimal in symbolic action cost, but assume domain-specific samplers and do not jointly optimize their motions and action parameter choices.
Recent results~\cite{shome_pushing_boundaries_2020} have shown that almost-sure asymptotic optimality results from pure motion planning are preserved for multimodal TMP under realistic assumptions on the measure of mode transition sets.\squeezeWords

TMIT* uses almost-surely asymptotically optimal sampling based motion planning for performant planning in high dimensional state spaces.
It does not require any precomputation or domain-specific samplers, and solves both manipulation and more general TMP problems.

\subsection{Batch-Sampling-Based Motion Planning}\label{sec:relatedwork.batch}

BIT*~\cite{gammell_batch_informed_2015} approaches sampling-based motion planning by viewing batches of valid samples as vertices in a series of edge-implicit \emph{random geometric graphs} (RGGs).
This perspective allows planners to order their search in a principled manner and incorporate problem-specific information, such as cost heuristics.
AIT*~\cite{strub_aitstar_eitstar_2021} builds on this idea with an asymmetric bidirectional search.
The reverse search computes an accurate cost heuristic without edge validation to guide the forward search for a solution, focusing the forward search and reducing the number of fully evaluated edges.

TMIT* extends AIT* to plan in multimodal spaces.
AIT*'s lazy reverse search allows TMIT* to check the geometric feasibility of actions in a candidate symbolic plan before committing to computing its full motion plan.
\Citet{toussaint_logic-geometric_programming_2015} uses a similar hierarchy of feasibility checks to filter the space of symbolic plans.

\subsection{Symbolic Planning for TMP}\label{sec:relatedwork.symbolic}

TMP solvers adapt advances in standalone symbolic planning to the TMP context~\cite{garrett_pddlstream_integrating_2020,dantam_incremental_constraint-based_2018,srivastava_combined_task_2014,garrett_ffrob_leveraging_2017}.
Symbolic planning for TMP is uniquely challenging since valid symbolic plans may not correspond to valid motion plans.
TMP symbolic planners must be able to efficiently incorporate geometric feasibility constraints to generate alternative plans~\cite{dantam_incremental_constraint-based_2018}.
TMP solvers have addressed this need with custom action formulations~\cite{garrett_pddlstream_integrating_2020,srivastava_combined_task_2014}, top-$k$ symbolic planning with Monte Carlo tree search~\cite{ren_extended_task_2021}, and incremental constraint construction~\cite{dantam_incremental_constraint-based_2018}.

TMIT* extends the incremental Boolean-satisfiability-based (SAT-based) symbolic planner proposed by~\citet{kautz_planning_satisfiability_1992} and adapted for TMP by~\citet{dantam_incremental_constraint-based_2018}.
Compared to this original adaptation, we neither include symbolic representations of continuous scenegraphs nor prediscretize the continuous problem state.
We also solve a relaxed symbolic planning problem, do not require a translation step between discrete and continuous state, and use a new feature of the Z3 SMT solver~\cite{demoura_z3_efficient_2008} to improve planner performance and extensibility.\squeezeWords{}
\new{Other work has encoded the full Planning Domain Definition Language+ (PDDL+) symbolic language into SMT~\cite{cashmore_compilation_full_2016}.
TMIT* uses a subset of the simpler, more common PDDL 2.1.}

\subsection{Deferred Action Parameter Sampling}\label{sec:relatedwork.deferred}

Finding geometrically valid values for continuous action parameters (e.g., grasps and placement poses) is one of the core challenges of TMP~\cite{garrett_integrated_task_2020}.
Recent work~\cite{garrett_pddlstream_integrating_2020,migimatsu_object-centric_task_2019} attempts to reduce the computational burden of finding valid parameters by \new{validating actions incrementally}.

TMIT* adopts the differentiable distance function predicate representation of~\cite{thomason_unified_sampling-based_2019}.
This representation allows us to directly sample continuous states satisfying symbolic action preconditions and guide the motion planner toward these states.
This guidance lets TMIT* defer sampling action parameters until it has evidence from the motion planner that an action is both geometrically feasible and likely to be part of a valid solution.\looseness=-1

\section{Problem formulation}\label{sec:approach.problem}

A robot, $\robot$, comprises a kinematic tree of joint-connected links, $l \in \mathcal{L}$, and a mobile base with pose $P(\robot) \in \SE{2}$\footnote{TMIT* is not limited to planar mobile robots.}.
An object, $o \in \objects$, is a physical entity in the robot's environment with an associated 3D geometry and pose, $P(o) \in \SE{3}$.

A predicate defines a property of the combined robot and object configuration (\cref{def:predicate}) as a relation.
A symbol is a predicate applied to specific arguments (\cref{def:symbol}). 

\begin{definition}[Predicate]\label{def:predicate}
	A predicate, $p \in \predicates$, \new{is a function from a set of objects and/or robot links, $p_{\objects \cup \mathcal{L}} \subseteq \objects \cup \mathcal{L}$, to a Boolean domain, defining a problem-dependent property.
		Formally, $p: \prod_{n_p} p_{\objects \cup \mathcal{L}} \to \mathbb{B}$, where $n_p$ is the arity of $p$, and $\prod$ is the Cartesian product.}
	\new{Predicates may define properties with \emph{geometric} or \emph{discrete} interpretations (\cref{sec:approach.predicates})}.
\end{definition}

\begin{definition}[Symbol]\label{def:symbol}
	A symbol, $\sym \in \symbols$, is the application of a predicate, $p$, to specific arguments (also known as a proposition or ground predicate).
	A symbol is \true{} in a state if the resulting values of its arguments satisfy the associated predicate's property, and is \false{} otherwise\footnote{TMIT* extends to other SMT-encodable discrete symbolic domains.}. 
	Some symbols correspond to differences in the free configuration space, e.g., a symbol representing the robot's manipulator holding a specific object.
\end{definition}

The state space for our problem is the combination of the discrete and continuously valued dimensions (\cref{def:statespace}).
A mode is a subset of the state space with a unique, fixed setting of the discrete state and poses of ungrasped movable objects, and a \emph{mode family} refers to an infinite set of modes which share their discrete state setting (\cref{def:modes})~\cite{kingston_informing_multi-modal_2020}.
A motion is a continuous path through the state space (\cref{def:motion}). 
\begin{definition}[State Space]\label{def:statespace}
	The state space of the planning problem is the Cartesian product of the continuous robot configuration and object poses and the discrete symbolic state,
	\begin{equation}
		\statespace = \config_\robot \times \config_\objects \times \mathbb{B}^{\left|S\right|},
	\end{equation}
	where $\config_\robot$ is the robot configuration space~\cite{lavalle_planning_algorithms_2006}, $\config_\objects$ is the space of all the objects' poses, and $\mathbb{B}^{\left|S\right|}$ is a Boolean domain representing the value of the \new{discrete} symbols.
\end{definition}

\begin{definition}[Modes and Mode Families]\label{def:modes}
	A mode family, $\ModeFam \subseteq \statespace$, of a \new{specific} setting of the discrete state, $\xi \in \BoolDom^{|S|}$, is defined as:
	\begin{equation}
		\ModeFam = \left\{(\state_\robot \in \config_\robot, \state_\objects \in \config_\objects, \xi)\right\}
	\end{equation}
	A mode, $\mode \subseteq \ModeFam$, for a \new{specific} set of movable object poses, $\rho \in \config_\objects$, is defined as:
	\begin{equation}
		\mode = \left\{(\state_\robot \in \config_\robot, \rho, \xi)\right\}
	\end{equation}
\end{definition}
\begin{definition}[Motion]\label{def:motion}
	A motion between two states, $\state_i, \state_f \in \statespace$, is a sequence of states described by a function, $\motion: \left[0, 1\right] \to \statespace$, that starts at the initial state, $\motion\left(0\right) = \state_i$, ends at the final state, $\motion\left(1\right) = \state_f$, and is continuous at all points, $\forall t, \lim_{\tau \to t} \motion(\tau) = \motion(t)$.
	By definition, a motion cannot change the discrete symbolic state.
\end{definition}

Discrete actions are defined by their necessary preconditions and their resulting effects (\cref{def:constraint,def:effect,def:action}).

\begin{definition}[Precondition]\label{def:constraint}
	A precondition or constraint,
	$
		\constraint: \statespace \to \mathbb{B}
	$,
	is a function that evaluates a Boolean propositional logic formula at a state.
\end{definition}

\begin{definition}[Effect]\label{def:effect}
	An effect,
	$
		\effect: \statespace \to \statespace
	$,
	is a deterministic function that possibly modifies a state in a discontinuous manner.
	An effect may alter the continuous and discrete parts of a state, but many TMP action effects only modify the discrete part.
\end{definition}

\begin{definition}[Action]\label{def:action}
	An action, $\action \in \actions$, $\action = \left(\constraint_\action, \effect_\action\right)$, comprises the preconditions, $\constraint_\action$, necessary to execute the action and the effects, $\effect_\action$, of the action on the state when successfully executed, as in symbolic planning~\cite{ghallab_automated_planning_2014}.
	The set of actions, $\actions$, includes the \emph{null action}, $\bot$, defined such that,
	$\forall \state \in \statespace,\, \constraint_\bot\left(\state\right) = \true$, and,
	$\forall \state \in \statespace,\, \effect_\bot\left(\state\right) = \state$.
	An action can be considered an abstraction of a specific high-level robot skill (e.g., grasping, pushing, pouring, etc.).
\end{definition}

The TMP problem is then formally defined as the search for a plan of actions and motions (\cref{def:problem}).

\begin{definition}[Task and Motion Planning (TMP) Problem]\label{def:problem}
	Let $\state_0 \in \statespace$ be an initial state, $\constraint_\text{g}$ be a goal specified as a constraint, and $\actions$ be a set of discrete actions executable by a robot.
	The TMP problem is then formally defined as the search for motions, $\motion_i \in \motions$, and symbolic actions, $\action_i \in \actions$, that can be interleaved into a task-and-motion plan, $(\motion_1, \action_1, \motion_2, \action_2, \ldots, \motion_n, \action_n)$, such that:
	\begin{enumerate}
		\item The plan begins at the initial state, $\sigma_1(0) = \state_0$.
		\item Robot and object motions are valid (e.g., collision free),
		      \begin{equation*}
			      \forall i = 1, 2, \ldots, n,\, \forall t \in \left[0, 1\right], \mathtt{is\_valid}\left(\motion_i(t)\right) = \true{}.
		      \end{equation*}
		\item The final state of each motion, $\motion_i(1)$, satisfies the requirements to execute the following action, $\action_{i}$,
		      \begin{equation*}
			      \forall i = 1, 2, \ldots, n,\; \constraint_{\action_{i}}(\motion_i(1)) = \true{}.
		      \end{equation*}
		\item The effect of each intermediate action, $\action_{i}$, results in the initial state of the following motion, $\motion_{i+1}(0)$, and
		      \begin{equation*}
			      \forall i = 1, 2, \ldots, n-1,\; \effect_{\action_i}(\motion_{i}(1)) = \motion_{i+1}(0).
		      \end{equation*}
		\item The effect of the final action, $\effect_{\action_n}$, meets the specified goal constraint,
		      $\constraint_\text{g}(\effect_{\action_n}(\motion_n(1))) = \true{}$.
	\end{enumerate}
\end{definition}

Optimal TMP finds TMP solutions which optimize a given cost function (\cref{def:optimal.tmp}).

\begin{definition}[Optimal TMP]\label{def:optimal.tmp}
	Let $\Theta$ be the set of all valid solutions to a TMP problem (\cref{def:problem}) and $\cost: \Theta \to \mathbb{R}^{\geq 0}$ be a cost function.
	The optimal TMP problem is the search for \new{a} lowest cost solution, $\theta^* \in \Theta$, such that
	$
		\theta^* = \argmin_{\theta \in \Theta}{\cost(\theta)}
	$
\end{definition}

Solving TMP as independent symbolic and motion planning problems is commonly infeasible~\cite{garrett_integrated_task_2020}; TMIT* presents a holistic approach that solves the integrated problem.


\section{Task and Motion Informed Trees (TMIT*)}\label{sec:approach}

TMIT* (\cref{fig:planning-loop}) plans in a multimodal hybrid state space \cite{thomason_unified_sampling-based_2019} in which any valid path, annotated with symbolic actions at certain states, is a valid task-and-motion plan.
It uses an incremental SMT-based symbolic planner inspired by~\citet{dantam_incremental_constraint-based_2018} on a relaxed problem to find a candidate symbolic plan, defining a sequence of hybrid state space modes.
It samples batches of states along this mode sequence and uses a distance-based geometric predicate representation~\cite{thomason_unified_sampling-based_2019} to detect when samples are within the connection radius~\cite{karaman_sampling-based_algorithms_2011} of precondition-satisfying states for an action in the plan.
Sampling these precondition-satisfying states corresponds to choosing continuous action parameters (e.g., grasps) and allows TMIT* to connect to the next reachable modes.

If this marched sampling procedure does not reach the goal mode then the symbolic plan is infeasible at the current resolution and TMIT* uses knowledge of the modes in which it failed to inform the search for a new symbolic plan.
Sampling resumes without discarding old symbolic plans or samples to reuse motion planning effort and allow increased sample resolution to prove symbolic plan feasibility.
If the sampling procedure does reach the goal mode then the symbolic plan is validated further via an asymmetric bidirectional search~\cite{strub_aitstar_eitstar_2021}.
This search process continues until a solution is found and almost-surely converges asymptotically towards the jointly optimal task-and-motion solution.


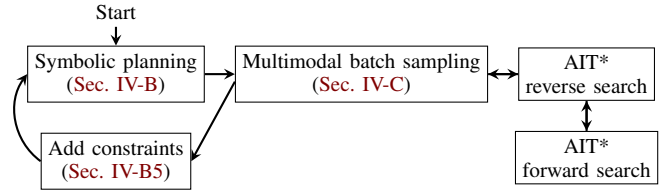
\begin{figure}[t]
	\centering
  \footnotesize
	\begin{tikzpicture}[node distance = 0.4cm, auto]
		\node (symbolic) [process] {Symbolic planning\\(\cref{sec:approach.symbolic})};
    \node (start) [above=0.25cm of symbolic] {Start};
		\node (batch) [process, right=of symbolic] {Multimodal batch sampling\\(\cref{sec:approach.sampling})};
		\node (constraints) [process, below=of symbolic] {Add constraints\\(\cref{sec:alternative.plans})};
		\node (reverse) [process, right=of batch] {AIT*\\reverse search};
		\node (forward) [process, below=of reverse] {AIT*\\forward search};

    \draw [arrow] (start) -- (symbolic);
    \draw [arrow] (symbolic) -- (batch);
    \draw [arrow] (batch.east) -- (reverse.west);
    \draw [arrow] (reverse) -- (forward);
    \draw [arrow] (forward) -- (reverse);
    \draw [arrow] (constraints.west) to [bend left=45] (symbolic.west);
    \draw [arrow] (reverse.west) -- (batch.east);
    \draw [arrow] ([yshift=-0.1cm]batch.west) -- (constraints.east);
	\end{tikzpicture}%
  \caption{The TMIT* planning loop. TMIT* cycles between generating candidate symbolic plans, (\cref{sec:approach.symbolic}), sampling batches of states in the modes traversed by the candidate plans (\cref{sec:approach.sampling}), and the reverse and forward search stages of AIT*~\cite{strub_aitstar_eitstar_2021}. If it fails to reach the goal mode, it adds constraints (\cref{sec:alternative.plans}) and returns to the symbolic planner for a new candidate. This process continues until it finds a valid path to the goal.}\label{fig:planning-loop}
\end{figure}

\subsection{Predicate representation}\label{sec:approach.predicates}

TMIT* uses a split predicate representation to simplify the symbolic planning subproblem and guide the motion plan search toward action-precondition-satisfying states.

We partition the set of predicates into those with \emph{geometric} properties, $\geometricpreds \subseteq \predicates$, and those with \emph{discrete} properties, $\discretepreds = \predicates \setminus \geometricpreds$.
Some predicates have natural categorizations, e.g., a discrete predicate representing a light's state or a geometric predicate describing distance to an object.
For others, the correct interpretation may be problem specific, e.g., a predicate for an object being on a specific surface.
\new{This distinction aids in representing and solving TMP problems.}

Discrete predicates comprise the discrete part of the state space (\cref{def:statespace}).
Geometric predicates have associated differentiable functions defining the distance from a given state to \new{a} nearest predicate-satisfying state.
The distance for a geometric predicate, $p \in \geometricpreds$, is zero at a given state, $\state \in \statespace$, if and only if the state, $\state$, satisfies the predicate, $p$.
We project uniform-randomly sampled states onto the zero-level set of this function via gradient-based optimization~\cite{nocedal_numerical_optimization_2006}.

Predicate distance functions can be defined in different ways~\cite{osher_signed_distance_2003, thomason_counterexample-guided_repair_2021,thomason_unified_sampling-based_2019}.
We automatically derive distance functions for symbolic formulae with ``unsatisfaction semantics''~\cite{thomason_unified_sampling-based_2019}.

\subsection{Symbolic planning}\label{sec:approach.symbolic}

TMIT* solves the symbolic planning subproblem with an incremental SMT-based symbolic planning algorithm inspired by~\citet{dantam_incremental_constraint-based_2018}.
SMT solvers generalize SAT by extending the variable types (to include integers, real numbers, arrays, etc.) and relations, which are backed by efficient solvers for the corresponding formal theories~\cite{biere_handbook_satisfiability_2009}.

Basic SAT-based symbolic planning~\cite{kautz_encoding_plans_1996}
\begin{enumerate*}
	\item creates Boolean variables for each action and symbol,
	\item adds constraints encoding the initial and goal state,
	\item constrains variables to remain consistent between steps (i.e., the ``frame axioms''),
	\item constrains actions to imply their preconditions and effects, and
	\item iteratively increases the number of plan steps until the resulting formula is satisfiable
\end{enumerate*}.
\Citet{dantam_incremental_constraint-based_2018} extend this core approach by encoding discretized geometric state and using the Z3 SMT solver's~\cite{demoura_z3_efficient_2008} incremental constraint stack to reuse solver effort when adding plan steps.

We further extend the approach of~\citet{dantam_incremental_constraint-based_2018} with a custom theory that improves performance and expressivity.
We specificially
\begin{enumerate*}
	\item enforce the frame axioms, precondition constraints, and other planning constraints implicitly via Z3's \emph{user propagator}, reducing the number of long-lived clauses,
	\item relax the symbolic planning problem by omitting geometric propositions from precondition formulae, and
	\item do not discretize the continuous state.
\end{enumerate*}
We outline TMIT*'s symbolic planner in~\cref{sec:encoding.state,sec:encoding.actions,sec:encoding.frame.and.mutex,sec:symbolic.loop,sec:alternative.plans}.

\subsubsection{State encoding}\label{sec:encoding.state}

In contrast to~\cite{dantam_incremental_constraint-based_2018}, we create Boolean indicator variables only for each \emph{discrete} symbol (i.e., those corresponding to predicates in \discretepreds) at each step of the plan and omit the geometric symbols.
We also do not discretize or symbolically represent the geometric state in the symbolic planning problem, \new{relaxing the symbolic planning problem by optimistically ignoring constraints from geometric predicates.
This frees the SMT solver from explicitly considering geometric relations.\squeezeWords}

\subsubsection{Action encoding}\label{sec:encoding.actions}

We create a Boolean indicator variable, \new{$\bvar{\action}^j \in \mathbb{B}$, for each action, $\action$}, at each step $j$.
This indicator variable is \texttt{true} if the plan takes the associated action at step $j$ and \texttt{false} otherwise.
Our custom theory implicitly enforces the precondition and effect constraints, \new{$\bvar{\action}^j \implies \constraint^{j-1}_{\action} \wedge \effect^j_{\action}$}, where \new{$\constraint^{j-1}_{\action}$} is the precondition of $\action$ encoded using the variables for the relevant \new{discrete} symbols at step $j - 1$, and \new{$\effect^j_{\action}$} is the effect of $\action$ applied at step $j$.
When Z3 assigns a value to an action indicator variable, we assert any unsatisfied clauses in the relevant precondition and effect constraints.
This improves performance by reducing the number of clauses for the SMT solver to validate.\squeezeWords

\subsubsection{Frame axioms and action mutexes}\label{sec:encoding.frame.and.mutex}

We also implicitly enforce the frame axioms and action mutex constraints.
The frame axioms specify that a symbol's value at step $j$ is the same as at the preceding step unless an action that modifies it is chosen at step $j$.
This ensures that variable values remain consistent between steps.
When Z3 assigns a symbol indicator variable's value at a step, we assert the relevant frame axiom.
\new{Action mutex constraints force Z3 to choose only a single action per step.}\squeezeWords

\subsubsection{The symbolic planning loop}\label{sec:symbolic.loop}

The symbolic planning loop is a typical incremental SAT-based planner~\cite{dantam_incremental_constraint-based_2018}.
We first assert the initial discrete state and add plan steps up to a heuristically determined minimum plan length\footnote{Zero, unless we have a better estimate for a problem.}.
Each step adds constraints asserting the discrete state at the step and the action transition constraints and frame axioms (\cref{sec:encoding.actions,sec:encoding.frame.and.mutex}).
We then assert the goal constraint and invoke the Z3 SMT solver~\cite{demoura_z3_efficient_2008} to attempt to find a solution.
We add new steps and reinvoke the solver until a satisfying assignment exists.
We extract a plan from a solution by checking which action indicator variables are \texttt{true} at each step.

\subsubsection{Generating alternative plan candidates}\label{sec:alternative.plans}

TMIT* has two methods of generating alternative symbolic plan candidates.
\new{The first enumerates all candidate symbolic plans by requiring new plans to differ from previous plans by at least one action, ensuring completeness~\cite{dantam_incremental_constraint-based_2018}.}
This method adds constraints:
\begin{equation}
	\neg \bigwedge_{1 \le j \le n} \left\lparen \bigwedge_{\action \in \actions} \bvar{\action}^j = \new{\texttt{value}_k}(\bvar{\action}^j)\right\rparen\label{eqn:alternative}
\end{equation}
for a current solution with length $n$, and where \new{$\texttt{value}_k$} returns the value of a variable in the \new{$k$-th candidate plan}.

A \emph{prefix} of a symbolic plan is a subsequence of the actions in the plan starting with the initial action.
The second method forces new plans to avoid \emph{failing prefixes} of symbolic plan candidates by adding constraints of the same form as~\cref{eqn:alternative}, but only until the first step for which precondition-satisfying state \new{sampling failed}.
\new{Precondition-satisfying state sampling may fail if sample projection (\cref{sec:approach.predicates}) fails to converge, if it converges to a local minimum off the precondition-satisfying manifold, or if the resulting state is invalid (i.e., in collision).}
This can quickly eliminate broader groups of candidate symbolic plans and find a solution more efficiently, but may remove prefixes that \new{would prove feasible with more computation}.
We use this prefix-blocking method in the experiments of~\cref{sec:evaluation}.\squeezeWords

\subsection{Motion Planner Integration}\label{sec:approach.sampling}

TMIT* solves the motion planning subproblem by building upon AIT*~\cite{strub_aitstar_eitstar_2021}, an almost-surely asymptotically optimal batch-sampling-based motion planner, to find geometrically valid instantiations of candidate symbolic plans.
We extend AIT*'s batch sampling to
\begin{enumerate*}
	\item sample states in each reachable mode and
  \item sample action precondition-satisfying states only if a uniform sample is within \new{a tunable distance threshold (e.g., the connection radius~\cite{karaman_sampling-based_algorithms_2011})} of the precondition-satisfying region\squeezeWords
\end{enumerate*}.\squeezeWords

A mode is \emph{reachable} if either it is the initial mode (i.e., the initial discrete state and scene) or we have sampled a precondition-satisfying state for an action that transitions to it from a reachable mode.
Sampling batches across the reachable modes uniformly increases the resolution of AIT*'s RGG \new{(\cref{sec:relatedwork.batch})} and allows TMIT* to implicitly reevaluate old candidate symbolic plans without backtracking.

The set of precondition-satisfying states for an action is often a manifold of measure zero in the ambient configuration space due to dimensionality-reducing constraints and therefore has zero probability of being sampled with uniform-random sampling.
Computing precondition-satisfying states is computationally expensive relative to uniform-random sampling\new{, and many such states are challenging to reach with a motion plan (e.g., states close to objects being manipulated).
}

We \new{avoid wasted effort by} projecting uniform-random samples onto precondition-satisfying manifolds \new{only} when the samples are within \new{a tunable distance threshold (e.g.,~\cite{karaman_sampling-based_algorithms_2011})}
of the manifold\footnote{Sampling a precondition-satisfying state takes on the order of 10--100 optimizer iterations; testing the distance takes less than one~\cite{thomason_unified_sampling-based_2019}.}, which effectively \emph{inflates} the manifold to positive measure.
This strategy avoids computing unusable precondition-satisfying samples by only invoking this process starting from states that are
\begin{enumerate*}
	\item in the RGG and
	\item close enough to a precondition region to improve the solution cost
\end{enumerate*}.
Starting from valid states close to a precondition region may additionally improve the likelihood that the resulting precondition-satisfying sample will be valid.
\newlength{\textfloatsepsave}%
\setlength{\textfloatsepsave}{\textfloatsep}%
\setlength{\textfloatsep}{0pt}%
\begin{algorithm}[t]
  \small
	\SetAlgoLined{}
	\DontPrintSemicolon{}
	\SetKwFunction{Pop}{pop}
	\SetKwFunction{SampleValid}{SampleValid}
	\SetKwFunction{Valid}{IsValid}
	\SetKwFunction{NeedSamples}{NeedSamples}
	\SetKwFunction{SampleConstraint}{SamplePrecond}
	\SetKwFunction{ShouldAttempt}{Viable}
	\SetKwFunction{UpdateReachableModes}{UpdateModes}
	\SetKwFunction{ReachedMode}{AtGoal}
	\SetKwFunction{NoActions}{NoActions}
	\SetKwFunction{UpdateTaskPlan}{NewTaskPlan}
	\SetKwFunction{IncreaseThreshold}{IncreaseBudget}
	\SetStartEndCondition{ }{}{}%
	\SetKw{KwTo}{in}
	\SetKwFor{For}{for}{\string:}{}%
	\SetKwIF{If}{ElseIf}{Else}{if}{:}{elif}{else:}{}%
	\SetKwFor{While}{while}{:}{fintq}%
	\SetKw{And}{and}
	\SetKw{Not}{not}
	\SetKw{Or}{or}
	\SetInd{0.2em}{0.5em}
	\KwIn{Mode queue $\omega$, connection radius $\mu$, goal $\constraint_g$}
	\KwOut{Batch of samples $B$}
	$B \gets \{\}$\;
	\While(\tcp*[f]{All reachable modes}){$|\omega| > 0$}{
		$\mode \gets$ \Pop{$\omega$}\;
		\While{\NeedSamples{B}}{
			$s \gets \SampleValid{\mode}$\;
			$B \gets B \cup \{s\}$\;
			\lIf{$\constraint_g(s)$}{\Return{} $B$}
			\For{$\action_i \in \mode.\texttt{viable\_actions}$}{\label{lst:viable}
				\If{$d(\constraint_i, s) < \mu$}{
					$s' \gets$ \SampleConstraint{$\constraint_i$, s}\;
					\If{\Valid{$s'$}}{
						$B \gets B \cup \{s'\}$\;
						\UpdateReachableModes{$\omega$, $s'$}\;
					}
				}
			}
		}
	}
	\lIf{\NoActions{}}{\IncreaseThreshold{}}
	\lIf{\Not{} \ReachedMode{} \Or{} \NoActions{}}{\UpdateTaskPlan{}}
	\Return{$B$}
	\caption{Multimodal batch sampling}\label{alg:batch}
\end{algorithm}
\setlength{\textfloatsep}{\textfloatsepsave}%
\Cref{alg:batch} shows the multimodal batch sampling function.
\SampleValid\ draws each state uniformly at random from the valid configuration space of a given mode.
The viable actions (\cref{lst:viable}) of a mode, $\mode \in \ModeFam$, are symbolic actions that (1) are used at \ModeFam{} in a candidate symbolic plan and (2) have been attempted less than a heuristically determined number of times.
Attempting an action means trying to sample a valid state satisfying its precondition constraint.
The heuristic limit on attempts per action provides a budget of computation per candidate symbolic plan; \IncreaseThreshold\ increments this heuristic threshold.
\NoActions\ tests if any actions in any reachable mode are viable, and $d(\cdot, \cdot)$ returns the distance from a state to the nearest precondition-satisfying state.
\SampleConstraint\ projects the given state onto the manifold of precondition-satisfying states by gradient-based optimization.
\UpdateReachableModes\ adds newly reached modes to the mode queue, \ReachedMode\ checks if the total set of samples contains goal mode states, and \UpdateTaskPlan\ invokes the task planner (\cref{sec:symbolic.loop,sec:alternative.plans}).

The mode queue is ordered to prioritize recently reached modes.
This ordering creates behavior akin to the ``enforced hill climbing'' of the FastForward (FF) task planner~\cite{hoffmann_ff_planning_2001} by continuing the search in the resulting mode when an action succeeds, effectively following the corresponding task plan candidate as far as possible.
\Cref{alg:batch} returns early if it reaches a goal-satisfying state.
The mode queue persists across invocations of~\cref{alg:batch} for the same batch.

\subsection{Considerations for Multimodal AIT*}

Computing exact distance in the hybrid configuration space is PSPACE-complete\footnote{The distance between two states is a function of the shortest sequence of mode transitions between them, which is an optimal symbolic plan.}~\cite{vega-brown_task_motion_2020}.
We conservatively over-approximate configuration space distance by assuming states in different modes are infinitely far apart if we do not have a successful transition between their modes.
This approximation is not a metric, but suffices in practice.

AIT* uses the reverse search to calculate a heuristic to guide the forward search \new{(see~\cite{strub_aitstar_eitstar_2021} for details).
	This approximation must account for the conservative over-approximation of intermode distance.}
The reverse search therefore uses forward-direction transitions and distances between modes (distances are directionally symmetric within a mode).
%
%

\subsection{Implementation}

We provide a proof-of-concept implementation of TMIT* in C++\footnote{\url{https://robotic-esp.com/code/tmitstar}}.
We use the implementation of AIT* and associated sampling-based motion planning utilities from the Open Motion Planning Library~\cite{sucan_open_motion_2012} and use Bullet~\cite{coumans_bullet_physics_2013} for collision checking.
The geometric predicate implementation is an improved version of~\cite{thomason_unified_sampling-based_2019} using the Autodiff~\cite{leal_autodiff_modern_2018} library for automatic differentiation, NLOpt~\cite{johnson_nlopt_nonlinear-optimization} for optimization, and a bespoke dual-number automatic differentiation implementation in LuaJIT for predicate functions.

The planner's input is simpler than most other TMP solvers and does not include specialized samplers or planners, or prediscretized state.
It requires only a description of the initial scene, a specification of the robot morphology and kinematics, object and robot geometries, the symbolic planning domain and problem, and functions for geometric predicates.
%
%

\section{Analysis}\label{sec:analysis}

The probabilistic completeness and almost-sure asymptotic optimality of TMIT* follow from the corresponding properties of AIT*.
We sketch proofs of these properties for TMIT*.
In the following, assume that precondition regions are convex\new{, and that all precondition-satisfying states in a complete feasible plan are surrounded by a valid hyperball~\cite{shome_pushing_boundaries_2020}}.

\begin{theorem}[Probabilistic Completeness]
	Given a TMP problem as in~\cref{def:problem}, the probability that TMIT* does not find a solution goes to zero as the number of samples taken approaches infinity.
\end{theorem}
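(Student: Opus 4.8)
The plan is to prove completeness conditionally on the existence of at least one feasible task-and-motion plan (the statement is vacuous otherwise), and to decompose a run of TMIT* into the three stages of~\cref{fig:planning-loop}: (i) the symbolic planner eventually proposes a feasible symbolic plan $\pi^*$; (ii) the multimodal batch sampler of~\cref{alg:batch} realizes every mode transition of $\pi^*$, so that the goal mode becomes populated with samples; and (iii) the embedded AIT*~\cite{strub_aitstar_eitstar_2021} search connects $\state_0$ to the goal mode through the accumulated random geometric graph. Since there are finitely many stages, it suffices to show that each stage's failure probability tends to zero as the number of samples grows, and then to combine the bounds by subadditivity.

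For stage~(i) I would invoke completeness of the incremental SMT-based symbolic planner (\cref{sec:approach.symbolic}) using the plan-enumeration variant of~\cref{sec:alternative.plans} rather than the prefix-blocking heuristic used in the experiments. Because for each fixed plan horizon there are only finitely many distinct candidate plans, because~\cref{eqn:alternative} only forbids exact repetition of a candidate, and because the loop increases the horizon only after exhausting the current one, the (fixed, finite-length) feasible plan $\pi^*$ is necessarily proposed after finitely many solver calls. The observation that matters for the rest of the argument is that reached modes and previously drawn samples persist across re-invocations of~\cref{alg:batch}, so the sampling effort devoted to the modes of $\pi^*$ accumulates over the entire run rather than being reset.

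For stage~(ii) I would argue by induction along the mode sequence $\mode_0 \to \mode_1 \to \cdots \to \mode_k$ induced by $\pi^*$. The initial mode $\mode_0$ is reachable by definition. Assuming $\mode_{i-1}$ is reachable, the corresponding transition action is viable there, so each call to \SampleValid\ in $\mode_{i-1}$ is a uniform draw that may trigger a projection attempt. Here the standing assumptions do the work: convexity of the precondition region rules out spurious local minima, so gradient-based projection of a sample lying within the threshold $\mu$ of the precondition manifold lands near the feasible transition configuration, and the valid hyperball surrounding that configuration~\cite{shome_pushing_boundaries_2020} guarantees the projected state is collision-free. Consequently the set of configurations in $\mode_{i-1}$ whose draw-then-project succeeds has positive measure, hence is hit by a single uniform draw with probability $p_i > 0$, so after $n$ uniform draws in $\mode_{i-1}$ the transition to $\mode_i$ has been realized with probability at least $1-(1-p_i)^n \to 1$. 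Composing the $k$ steps (a finite product of quantities each tending to one) shows $\mode_k$ is populated with probability tending to one.

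The step I expect to be the main obstacle is reconciling this ``unboundedly many effective projection attempts'' picture with the per-action attempt budget that defines viability in~\cref{alg:batch}: a fixed budget would let the sampler permanently abandon a transition of $\pi^*$ after finitely many failed projections, and once $\pi^*$ is blocked by~\cref{eqn:alternative} it is not re-proposed. The argument must therefore exploit (a) that \IncreaseThreshold\ is invoked whenever \NoActions\ holds, so the budget grows without bound, and (b) that reachable modes and their samples persist, so that the cumulative sampling eventually realizes the modes of $\pi^*$ even though $\pi^*$ need not remain the current candidate. Granting this, stage~(iii) is immediate: restricted to the realized mode sequence, TMIT*'s graph is exactly an AIT* random geometric graph on a space in which a valid solution exists, so probabilistic completeness of AIT*~\cite{strub_aitstar_eitstar_2021} yields a connecting path with probability approaching one, and the three vanishing bounds combine to give the claim.
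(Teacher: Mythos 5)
Your proposal is correct and follows essentially the same route as the paper's own (sketch) proof: completeness of the SMT-based symbolic planner together with the distinctness constraints of~\cref{sec:alternative.plans}, positive-probability realization of each mode transition because convexity of precondition regions and the valid-hyperball assumption make the inflated (connection-radius-triggered) projection succeed from a positive-measure set, and finally AIT*'s probabilistic completeness on the accumulated graph. You merely make explicit what the paper leaves implicit — the induction along the mode sequence, and the resolution of the attempt-budget concern via the unbounded budget increase and the persistence of samples and reached modes across invocations of~\cref{alg:batch}.
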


\begin{proof}
	TMIT* eventually attempts every possible symbolic plan candidate, since SATPlan (\cref{sec:approach.symbolic}) is complete~\cite{kautz_encoding_plans_1996}, and the constraints described in~\cref{sec:alternative.plans} ensure that candidate symbolic plans are distinct.
	The symbolic planner will be invoked infinitely often until a TMP solution is found, since \Cref{alg:batch} requests a new task plan whenever it fails to reach the goal mode or runs out of viable actions.
	Actions have a finite attempt budget and the mode queue is always emptied before completing a batch.
	AIT* is probabilistically complete~\cite{strub_aitstar_eitstar_2021} and will eventually sample within the connection radius of each precondition region infinitely often.
	Each precondition-satisfying manifold is convex by assumption, so projecting uniform-random samples onto the precondition regions will eventually sample every point in the manifold.
	Thus, we will find a valid path through our planning space if one exists, and therefore are probabilistically complete.
\end{proof}

The sketch of almost-sure asymptotic optimality is similar.
\begin{theorem}[Almost-Sure Asymptotic Optimality]
	Given an optimal TMP problem as in~\cref{def:optimal.tmp}, TMIT* converges asymptotically to an optimal-cost solution with probability one as the number of samples approaches infinity.
\end{theorem}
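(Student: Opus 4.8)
The plan is to mirror the probabilistic-completeness argument and then layer AIT*'s almost-sure asymptotic optimality on top of it. Let $\theta^{*}$ be an optimal task-and-motion plan as in \cref{def:optimal.tmp}, and let $\pi^{*}$ be its underlying symbolic plan, which is necessarily feasible. First I would establish that $\pi^{*}$ (or a symbolic plan of no greater cost) is a candidate infinitely often: SATPlan is complete, the distinctness constraints of \cref{sec:alternative.plans} enumerate candidates without repetition, and \cref{alg:batch} invokes the symbolic planner whenever it fails to reach the goal mode or exhausts viable actions, so the symbolic planner runs unboundedly often. When the prefix-blocking variant is used, \texttt{NoActions} triggers \texttt{IncreaseBudget}, so the per-action attempt budget grows without bound; hence no feasible prefix — in particular no prefix of $\pi^{*}$ — is permanently excluded, and $\pi^{*}$ is produced as a candidate infinitely often.

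Second, conditioned on $\pi^{*}$ being an active candidate, I would show that TMIT* accumulates samples densely in the modes of $\pi^{*}$ and on the precondition-satisfying manifolds linking them. Each batch empties the mode queue (as noted in the completeness sketch), so every reachable mode of $\pi^{*}$ is sampled whenever $\pi^{*}$ is active; by the assumed convexity of precondition regions and the assumption that every precondition-satisfying state of a complete feasible plan is surrounded by a valid hyperball, the connection-radius thresholding in \cref{alg:batch} \emph{inflates} each transition set to positive measure, so projected samples land on — and eventually become dense in — each transition manifold. This is exactly the positive-measure mode-transition condition under which asymptotic optimality is known to transfer from pure motion planning to the multimodal setting~\cite{shome_pushing_boundaries_2020}.

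Third, I would invoke the almost-sure asymptotic optimality of AIT*~\cite{strub_aitstar_eitstar_2021}. Restricted to the mode sequence of $\pi^{*}$ with the sampled transition states included as vertices, TMIT*'s graph is an RGG of the type AIT* searches, and since TMIT* never discards samples, symbolic plans, or edges, this RGG only grows. Hence the cost of the best path AIT* finds through it converges, with probability one, to the optimum of the continuous problem along the $\pi^{*}$ mode sequence, which is $\cost(\theta^{*})$; and because the best solution found so far is retained and non-increasing, the overall solution cost converges almost surely to $\cost(\theta^{*})$.

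The main obstacle is reconciling the discrete/continuous interleaving with the conservative inter-mode distance over-approximation. I must argue that (i) the modes of $\pi^{*}$ continue to receive samples infinitely often despite the adaptive mode-queue prioritization and the presence of competing candidates, and (ii) once all transitions of $\pi^{*}$ have been sampled, the heuristic from AIT*'s reverse search is admissible along $\pi^{*}$ — the ``infinitely far apart until a transition exists'' rule only ever makes the heuristic conservative, never optimistic, so AIT*'s convergence guarantees are preserved — and that this admissibility suffices for the forward search to converge to the optimum rather than stalling on a costlier plan.
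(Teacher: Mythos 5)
Your proposal follows essentially the same route as the paper's (much briefer) sketch: completeness of the SMT-based symbolic planner guarantees the optimal action sequence is eventually considered, the connection-radius-triggered projection inflates the convex precondition manifolds to positive measure so transition states are sampled densely, and AIT*'s almost-sure asymptotic optimality within modes, with the reverse-search heuristic computed globally and only ever conservative across modes, yields convergence to the jointly optimal task-and-motion plan. The one overreach is your claim that \texttt{IncreaseBudget} prevents any feasible prefix from being permanently excluded under prefix blocking --- the paper itself concedes that prefix blocking may remove prefixes that would prove feasible with more computation, and its stated guarantee implicitly rests on the exhaustive-enumeration variant of \cref{sec:alternative.plans} (with previously considered candidates remaining geometrically realizable because samples and modes are never discarded).
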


\begin{proof}
	AIT* is asymptotically optimal within each mode and its reverse search heuristic is computed globally across modes.
	TMIT* samples precondition-satisfying states whenever a uniform-random sample is within the connection radius of the precondition region.
	TMIT*'s symbolic planner is complete, each precondition region is inflated to positive measure, and precondition regions are convex by assumption, so TMIT* will asymptotically sample better precondition-satisfying states for every viable action.
	Thus, as the number of uniform-random samples approaches infinity, we asymptotically converge to the optimal action sequence and associated optimal motion plan (the optimal task and motion plan) with probability one.
\end{proof}

\section{Evaluation}\label{sec:evaluation}
Directly comparing TMP solvers is challenging due to differing definitions of the TMP problem.
TMIT* does not assume prediscretization of continuous state~\cite{dantam_incremental_constraint-based_2018} or specialized blackbox precondition samplers~\cite{garrett_pddlstream_integrating_2020}, which makes its problems harder.
We perform a cold-data comparison to Planet~\cite{thomason_unified_sampling-based_2019}, as its problem definition and assumptions are closest to TMIT*'s.
All experiments were run on an AMD Ryzen 7 2700X CPU with 32 GB of RAM and use a PR2 robot model with 14 controllable joints and a planar mobile base \new{(17 total degrees of freedom)}.
We evaluate on two common TMP tasks: clutter clearing and shelf rearrangement.
We use batches of 50 samples per mode, a batch budget (the number of batches before requesting a new symbolic plan candidate) of five for clutter clearing and two for shelf rearrangement, and an initial action-attempt budget of one.
We use Euclidean distance within modes and optimize path length.
\squeezeWords

\paragraph{Clutter Clearing}

A set of colored sticks is initially scattered on one of two tables (\cref{fig:clutter.scene}).
The goal is to place each stick on one of two other tables corresponding to its color.
Every stick must be manipulated, and sticks occlude others in their initial positions, so solutions require reasoning over a long series of symbolic actions to move the sticks in a geometrically valid order.
The action space contains $16 \times |\objects|$ symbolic actions for each instance size and each symbolic action has an infinite number of possible continuous instantiations.
\Cref{fig:clutter.results} shows results for clutter clearing.

\paragraph{Shelf rearrangement}

The robot must retrieve and move a set of objects between two stacked shelf surfaces (\cref{fig:shelf.scene}).
The target objects are initially placed deep within the shelves and are surrounded by increasing numbers of distractor objects.
Motions are geometrically constrained by the shelves and the distractor objects must either be moved out of the way or maneuvered around to reach the target objects.
The action space contains $6 \times |\objects|$ symbolic actions for each instance size.
\Cref{fig:shelves.results} shows results for shelf rearrangement.

\subsection{Qualitative Discussion}

\Cref{fig:clutter.time} shows that TMIT* significantly outperforms Planet in initial solution time, often by an order of magnitude.
The results for Planet constitute 10 successful trials of each instance size; in contrast, the TMIT* results constitute 100 trials where timeouts are considered to have taken infinite time.
\Cref{fig:clutter.percent} shows that TMIT* finds initial solutions for most instances quickly but that larger problem instances are more likely to either time out or have higher variance in initial solution times.
This distribution reflects the greater geometric and symbolic challenge of the more complex instances.
\Cref{fig:clutter.optimizing} shows median solution costs for 100 trials of TMIT* on instances of clutter clearing with 3--5 target objects.
While the largest drop in median cost corresponds to initial solution discovery, the trends show that TMIT* makes consistent progress toward lower-cost solutions.

\Cref{fig:shelves.time} demonstrates the relative effect of minimum solution length versus the number of objects in a problem environment on TMIT*'s initial solution performance.
Although there are more objects present in large shelf rearrangement instances than large clutter clearing instances, the number of actions necessary to solve a shelf rearrangement problem is generally lower than the number required for a comparably large clutter clearing problem.
This results in TMIT* finding solutions for large shelf rearrangement instances faster than for large clutter clearing problems.
TMIT*'s motion-planner-guided sampling of continuous action parameters also sometimes allows it to find grasp poses for the target blocks that carefully reach past the distractor objects and reduce the overall plan length.
\Cref{fig:shelves.optimizing} shows TMIT* optimizing costs for instances of the shelf rearrangement problem.

\section{Conclusions}
\begin{figure*}[tp]
	\centering\subfloat[\label{fig:clutter.time}]{\includegraphics[width=0.33\textwidth]{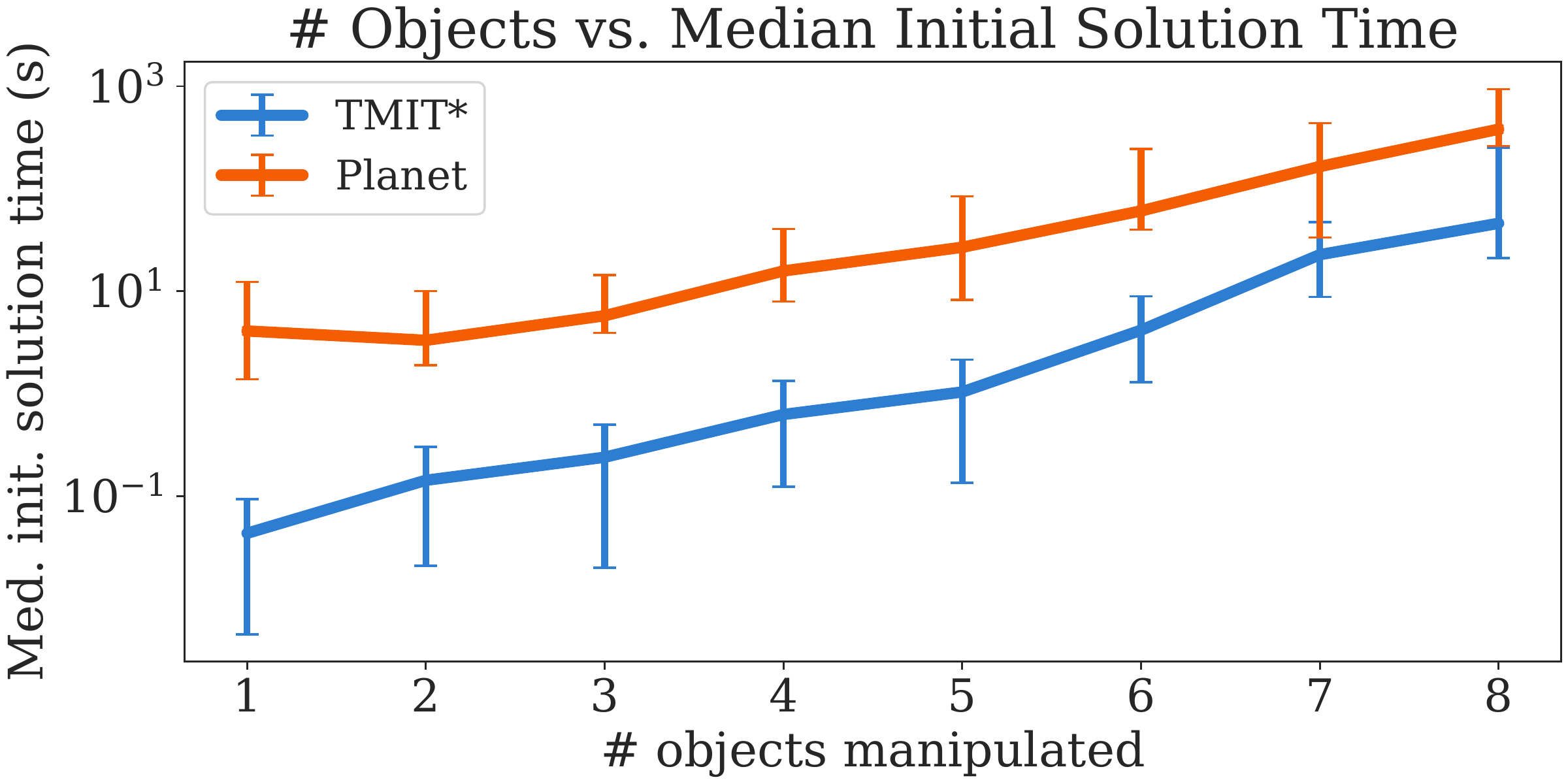}}%
	\hfill
	\subfloat[\label{fig:clutter.percent}]{\includegraphics[width=0.33\textwidth]{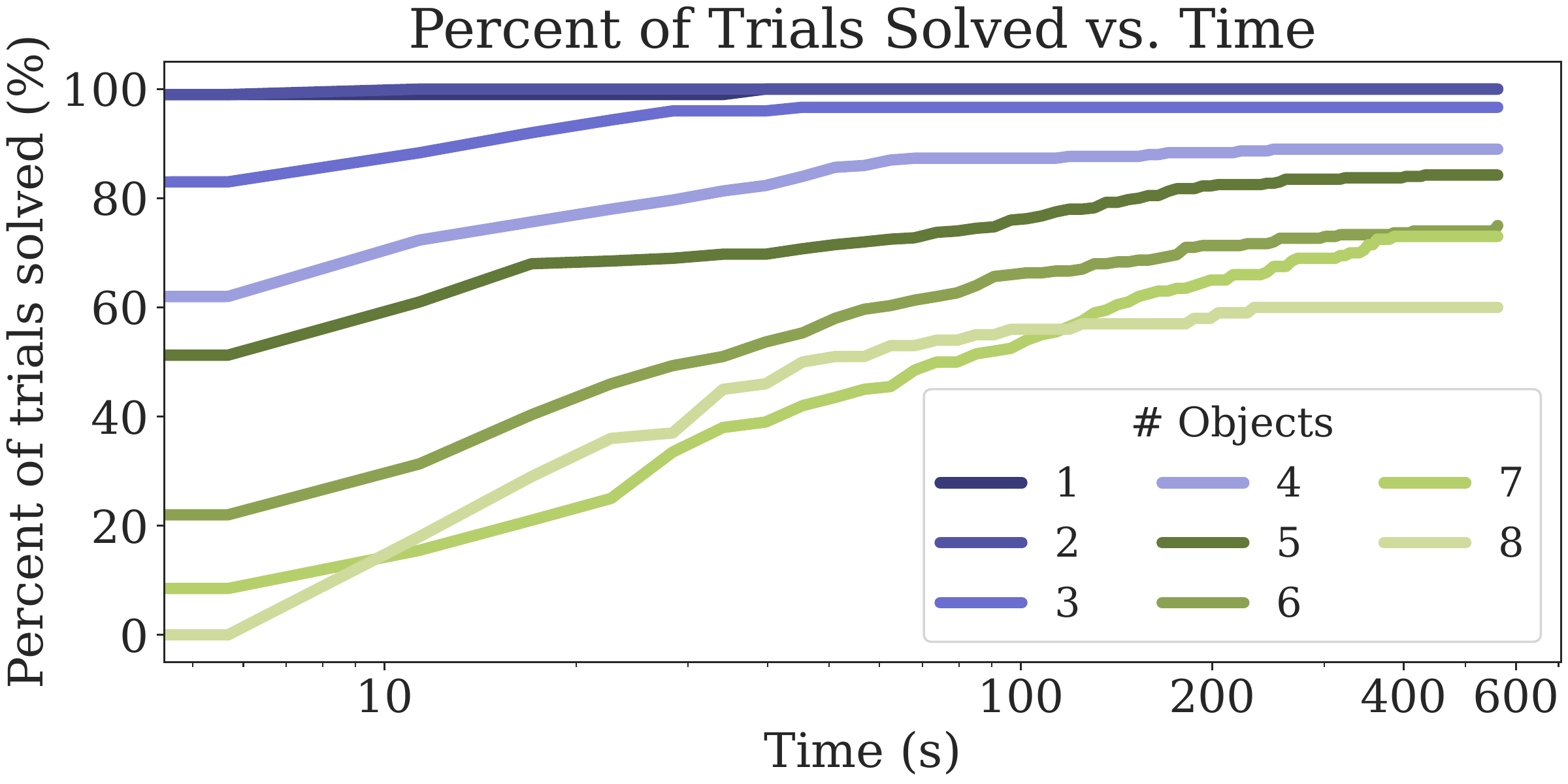}}%
	\hfill
	\subfloat[\label{fig:clutter.optimizing}]{\includegraphics[width=0.33\textwidth]{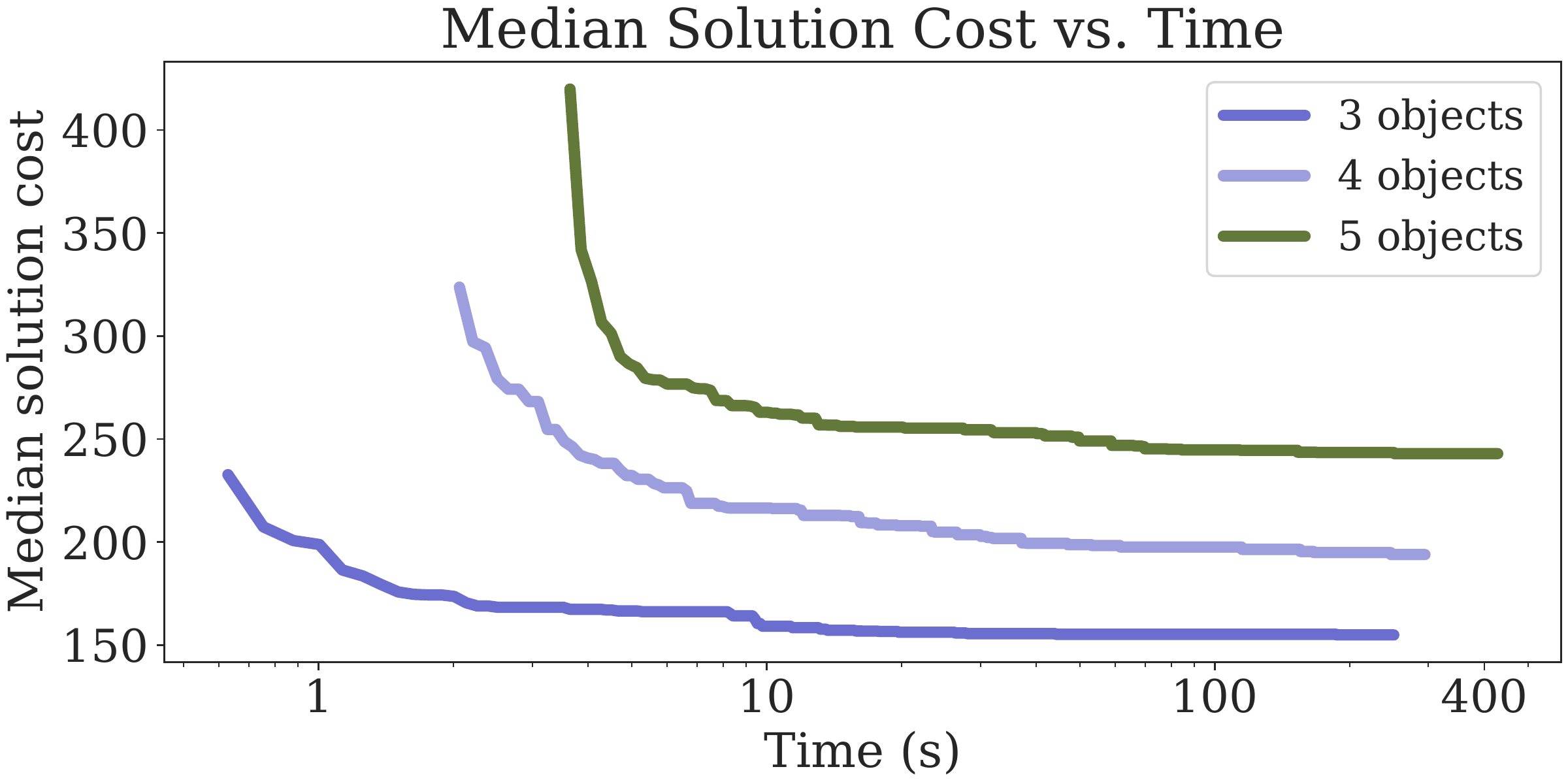}}%
	\caption{%
		Results for 100 trials of clutter clearing.
		\Cref{fig:clutter.time,fig:clutter.optimizing} show median time or cost (respectively); the error bars in~\cref{fig:clutter.time} show a 95\% confidence interval of the median.
		Each trial had 600 seconds of planning time; trials which failed to find a solution within this bound were counted as having infinite duration and cost.
		Trials in~\cref{fig:clutter.optimizing} terminated early if their cost converged.
		\Cref{fig:clutter.percent} show the cumulative percentage of problems solved as a function of time for each instance size.
		Timeouts are more frequent for larger instance sizes, reflecting the greater geometric difficulty of the problem.
	}\label{fig:clutter.results}
\end{figure*}
\begin{figure*}[tp]
	\centering\subfloat[\label{fig:shelves.time}]{\includegraphics[height=0.16\textwidth,width=0.33\textwidth]{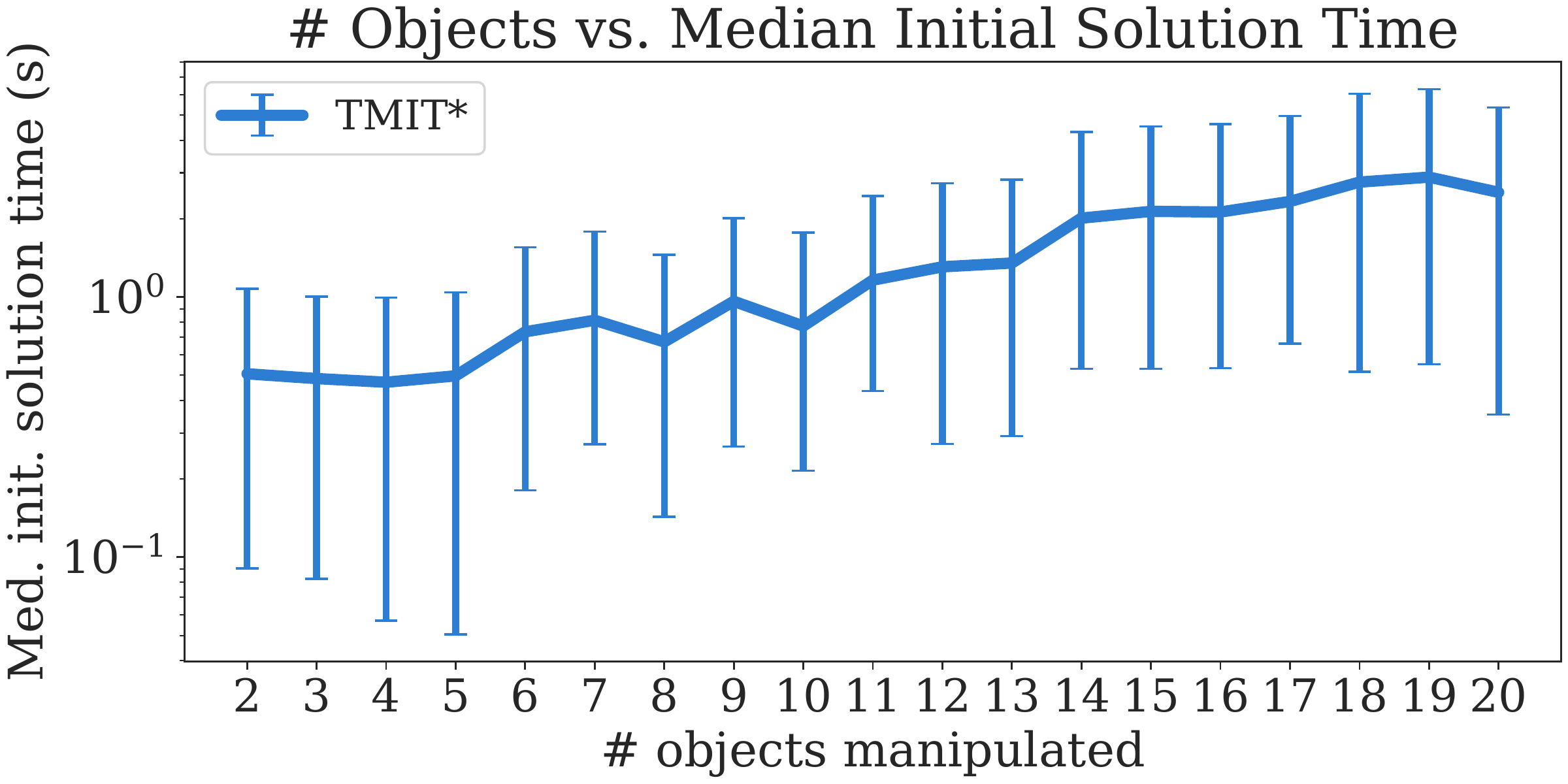}}%
	\hfill
	\subfloat[\label{fig:shelves.percent}]{\includegraphics[height=0.16\textwidth,width=0.33\textwidth]{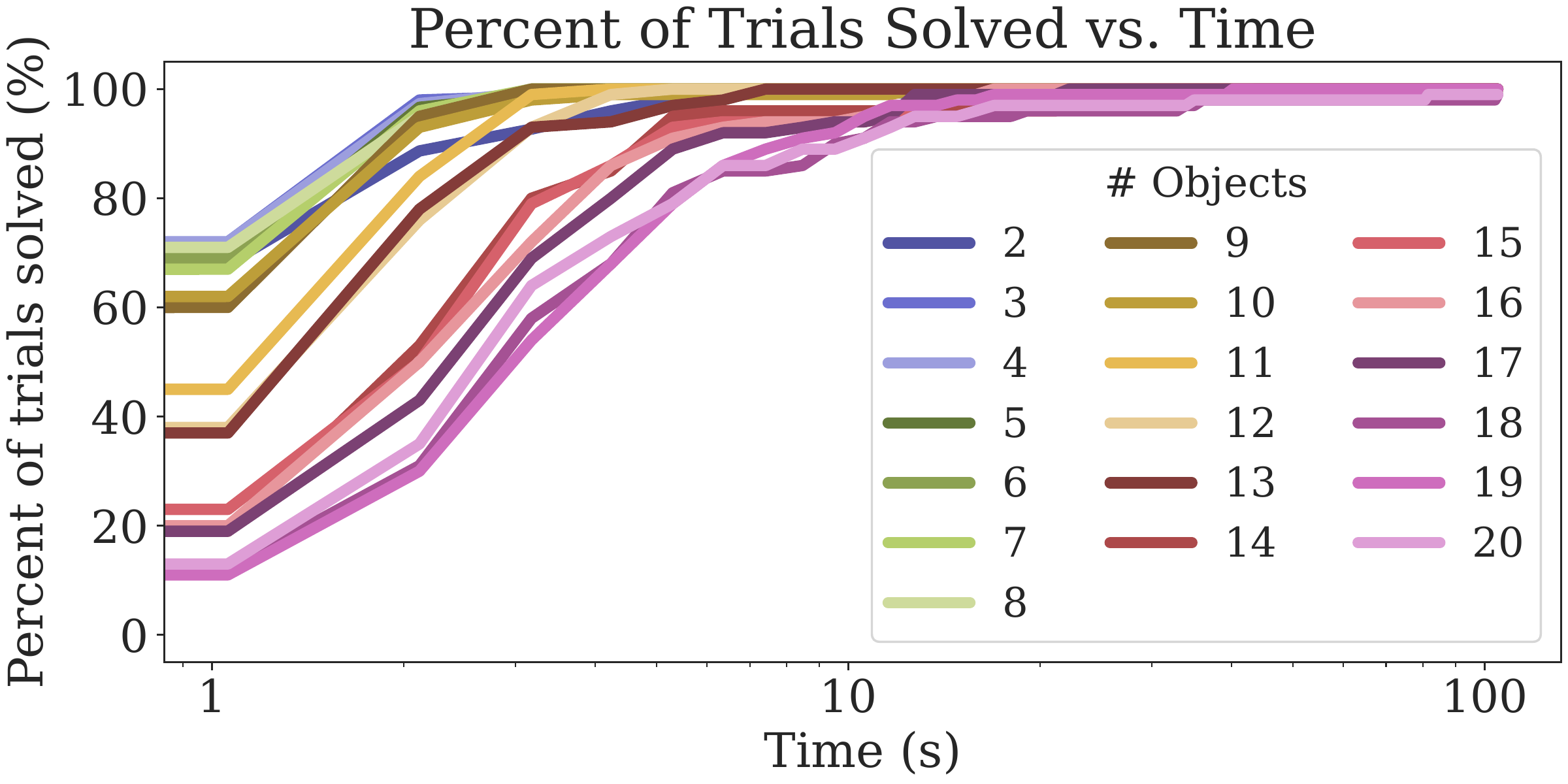}}%
	\hfill
	\subfloat[\label{fig:shelves.optimizing}]{\includegraphics[height=0.16\textwidth,width=0.33\textwidth]{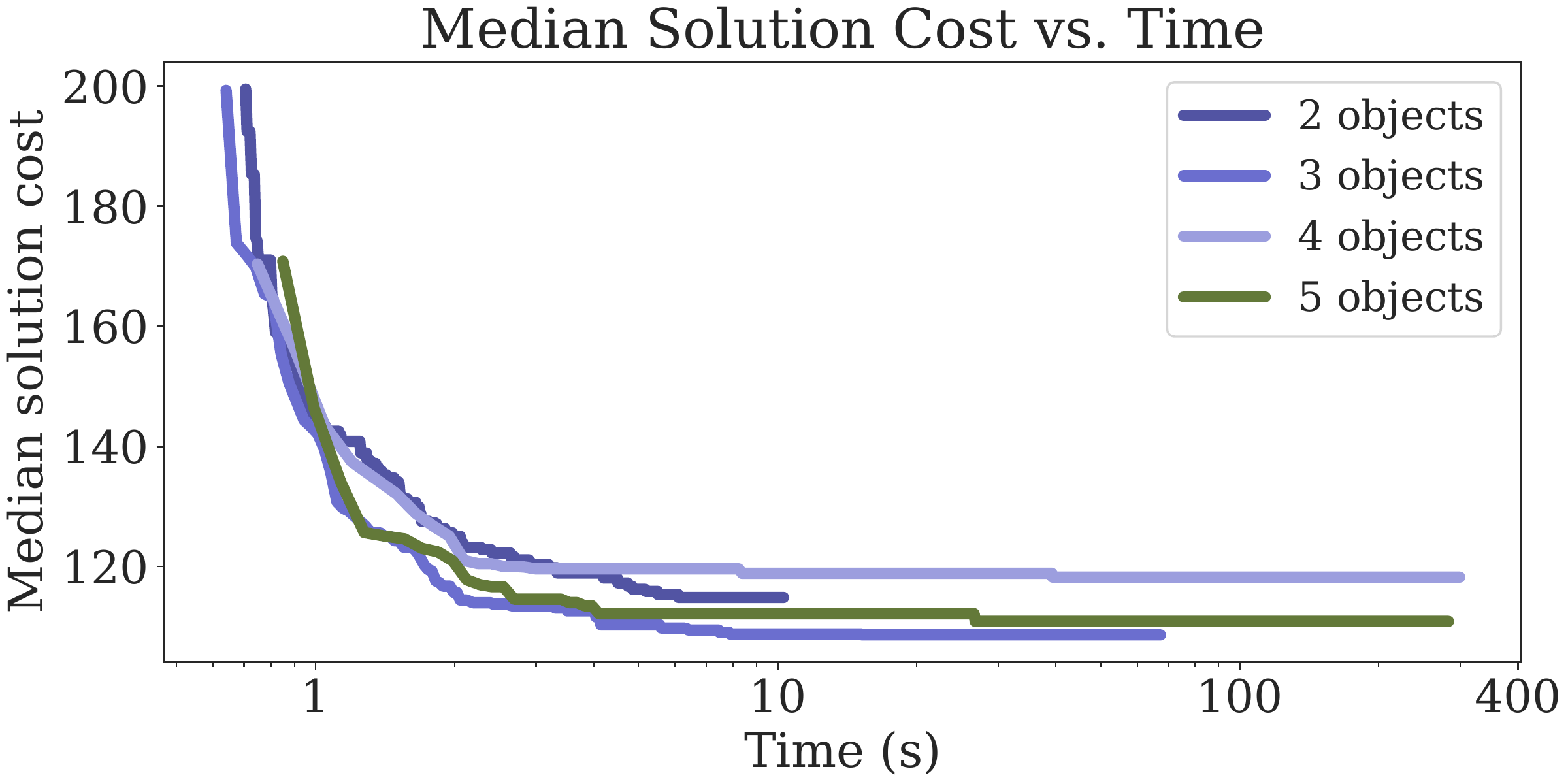}}%
	\caption{%
  Results for 100 trials of the shelf rearrangement problem.
\Cref{fig:shelves.time,fig:shelves.optimizing} show median time or cost (respectively); the error bars in~\cref{fig:shelves.time} show a 95\% confidence interval of the median.
Each trial had 300 seconds of planning time; trials which failed to find a solution within this time were counted as having infinite duration and cost.
\Cref{fig:shelves.percent} shows the cumulative percentage of problems solved as a function of time for each instance size.
}\label{fig:shelves.results}%
\end{figure*}
TMIT* is a novel approach to almost-surely asymptotically optimal TMP.\@
It extends work on constraint-based symbolic planning~\cite{dantam_incremental_constraint-based_2018}, distance-based predicate representation~\cite{thomason_unified_sampling-based_2019}, and batch-sampling-based optimal motion planning~\cite{strub_aitstar_eitstar_2021}.
TMIT* solves a relaxed symbolic planning problem with a novel SMT-based makespan-optimal symbolic planner to generate candidate sequences of actions, then attempts to find geometrically valid instantiations of these actions through asymmetric bidirectional batch-sampling-based motion planning in a hybrid multimodal state space.
It uses a differentiable distance-based representation of geometric predicates to guide parameter sampling and sample action-precondition-satisfying states through gradient-based optimization.
When candidate symbolic plans are not feasible, it generates alternatives by blocking invalid action sequence prefixes; however, it is able to continue to consider older candidate plans without backtracking by continuing the motion planning process.

Asymmetric bidirectional motion planning is well-suited to TMP because it gains information about action feasibility before paying the cost of validating a candidate plan's edges.
Incrementally improving a RGG further allows planners to reuse motion planning effort across candidate plans.
Future work may investigate using asymmetric bidirectional motion planning algorithms better suited for complex cost functions, such as Effort Informed Trees (EIT*)~\cite{strub_aitstar_eitstar_2021}.

Encoding task planning as SMT via a custom theory offers untapped potential performance improvements for TMP.\@
It provides an easy extension point for a ``theory of TMP'', incorporating geometric information such as reachability or action feasibility into the symbolic planner.
We leave exploration of this capacity for future work.

Future work could also investigate accelerating the discovery of initial solutions by explicitly biasing RGG growth toward task-relevant regions.

\printbibliography{}
\end{document}